\title{Synthesising Graphical Theories}
\titlerunning{Synthesising Graphical Theories}
\author{Aleks Kissinger}
\authorrunning{A. Kissinger}
\tikzstyle{every picture}=[baseline=-0.25em]
\tikzstyle{dotpic}=[scale=0.6]
\tikzstyle{diredges}=[every to/.style={diredge}]
\tikzstyle{dot graph}=[shorten <=-0.1mm,shorten >=-0.1mm,scale=0.6]
\tikzstyle{digraph}=[-latex]
\tikzstyle{plot point}=[circle,fill=black,minimum width=2mm,inner sep=0]
\tikzstyle{string graph}=[scale=0.6]
\tikzstyle{sg diredge}=[-stealth]
\tikzstyle{rewrite edge}=[-open triangle 45]
\tikzstyle{sg bold diredge}=[-stealth,thick,shorten >=-1pt]
\tikzstyle{sg vertex}=[circle,minimum width=2.2mm,fill=white,draw=black,inner sep=0mm]
\tikzstyle{labelled sg vertex}=[circle,minimum width=7mm,fill=white,draw=black,inner sep=0mm]
\tikzstyle{sg grey vertex}=[sg vertex,fill=gray!30!white]
\tikzstyle{sg black vertex}=[sg vertex,fill=black]
\tikzstyle{sg bold vertex}=[circle,minimum width=2.2mm,fill=white,draw=black,very thick,inner sep=0mm]
\tikzstyle{sg wire vertex}=[circle,minimum width=1mm,fill=black,inner sep=0mm]
\tikzstyle{tick vertex}=[rectangle,fill=black,minimum height=1mm,minimum width=2.5mm,inner sep=0mm]
\tikzstyle{braceedge}=[decorate,decoration={brace,amplitude=2mm,raise=-1mm}]
\tikzstyle{small braceedge}=[decorate,decoration={brace,amplitude=1mm,raise=-1mm}]
\tikzstyle{left hook arrow}=[left hook-latex]
\tikzstyle{right hook arrow}=[right hook-latex]
\tikzstyle{dot}=[inner sep=0.7mm,minimum width=0pt,minimum height=0pt,fill=black,draw=black,shape=circle]
\tikzstyle{white dot}=[dot,fill=white]
\tikzstyle{alt white dot}=[white dot,label={[xshift=2.9mm,yshift=-0.1mm]left:$\cdot$}]
\tikzstyle{gray dot}=[dot,fill=gray!50]
\tikzstyle{box vertex}=[draw=black,rectangle]
\tikzstyle{whitebg}=[fill=white,inner sep=2pt]
\tikzstyle{graph state vertex}=[sg vertex,fill=black]
\tikzstyle{wide point}=[fill=white,draw=black,shape=isosceles triangle,shape border rotate=90,isosceles triangle stretches=true,inner sep=1pt,minimum width=1.5cm,minimum height=5mm]
\tikzstyle{wide copoint}=[fill=white,draw=black,shape=isosceles triangle,shape border rotate=-90,isosceles triangle stretches=true,inner sep=1pt,minimum width=1.5cm,minimum height=5mm]
\tikzstyle{symm}=[ultra thick,shorten <=-1mm,shorten >=-1mm]
\tikzstyle{square box}=[rectangle,fill=white,draw=black,minimum height=6mm,minimum width=6mm]
\tikzstyle{square gray box}=[rectangle,fill=gray!30,draw=black,minimum height=6mm,minimum width=6mm]
\tikzstyle{point}=[regular polygon,regular polygon sides=3,draw=black,scale=0.75,inner sep=-0.5pt,minimum width=7mm,fill=white]
\tikzstyle{copoint}=[point,regular polygon rotate=180,fill=white]
\tikzstyle{gray point}=[point,fill=gray!40!white]
\tikzstyle{gray copoint}=[copoint,fill=gray!40!white]
\tikzstyle{open graph}=[baseline=-0.25em]
\tikzstyle{greybg}=[background rectangle/.style={fill=black!5,draw=black!30,rounded corners=1ex}, show background rectangle]
\tikzstyle{edge point}=[circle,minimum width=1mm,fill=black,inner sep=0mm]
\tikzstyle{vertex point}=[circle,minimum width=2.2mm,fill=white,draw=black,inner sep=0mm]
\tikzstyle{gray vertex point}=[circle,minimum width=2.2mm,fill=gray!30!white,draw=black,inner sep=0mm]
\tikzstyle{edge label}=[inner sep=2pt, font=\small]
\tikzstyle{on edge label}=[fill=white, font=\footnotesize, inner sep=1 pt]
\newcommand{\edgearrow}{{\arrow[black]{>}}}
\newcommand{\edgetick}{{\arrow[black,scale=0.7,very thick]{|}}}
\tikzstyle{diredge}=[postaction=decorate,decoration={markings, mark=at position 0.55 with \edgearrow}]
\tikzstyle{medium diredge}=[postaction=decorate,decoration={markings, mark=at position 0.75 with \edgearrow}]
\tikzstyle{short diredge}=[->]
\tikzstyle{halfedge}=[-)]
\tikzstyle{other halfedge}=[(-]
\tikzstyle{freeedge}=[(-)]
\tikzstyle{white edge}=[line width=5pt,white]
\tikzstyle{tick}=[postaction=decorate,decoration={markings, mark=at position 0.5 with \edgetick}]
\tikzstyle{small map edge}=[|-latex, gray!60!blue, shorten <=0.9mm, shorten >=0.5mm]
\tikzstyle{thick dashed edge}=[very thick,dashed,gray!40]
\tikzstyle{map edge}=[|-latex,very thick, gray!40, shorten <=1mm, shorten >=0.5mm]
\tikzstyle{tickedge}=[postaction=decorate,
\tikzstyle{dirtickedge}=[postaction=decorate,
\tikzstyle{dirdoubletickedge}=[postaction=decorate,
\tikzstyle{arrs}=[-latex,font=\small,auto]
\tikzstyle{arrow plain}=[arrs]
\tikzstyle{arrow dashed}=[dashed,arrs]
\tikzstyle{arrow bold}=[very thick,arrs]
\tikzstyle{arrow hide}=[draw=white!0,-]
\tikzstyle{arrow reverse}=[latex-]
\tikzstyle{cdnode}=[]
\tikzstyle{cnot}=[fill=white,shape=circle,inner sep=-1.4pt]
\tikzstyle{bang box}=[draw=black,dashed,minimum height=12mm,minimum width=12mm,fill=gray!20]
\tikzstyle{wire label}=[font=\footnotesize, auto]
\newtheorem{theorem}{Theorem}[section]
\newtheorem*{theorem*}{Theorem}
\newtheorem{lemma}[theorem]{Lemma}
\theoremstyle{definition}
\theoremstyle{definition}
\theoremstyle{definition}\newtheorem{definition}[theorem]{Definition}
\theoremstyle{definition}\newtheorem{definitions}[theorem]{Definitions}
\theoremstyle{definition}
\theoremstyle{definition}
\theoremstyle{definition}
\theoremstyle{definition}
\theoremstyle{definition}
\theoremstyle{definition}
\newcommand{\catFRel}{\ensuremath{\textrm{\bf FRel}}\xspace}
\newcommand{\catVect}{\ensuremath{\textrm{\bf Vect}}\xspace}
\newcommand{\catGraph}{\ensuremath{\textrm{\bf Graph}}\xspace}
\newcommand{\catMat}{\ensuremath{\textrm{\bf Mat}}\xspace}
\newcommand{\catSGraph}{\ensuremath{\textrm{\bf SGraph}}\xspace}
\newcommand{\dom}{\ensuremath{\textrm{\rm dom}}}
\newcommand{\cod}{\ensuremath{\textrm{\rm cod}}}
\newcommand{\In}{\textrm{\rm In}}
\newcommand{\Out}{\textrm{\rm Out}}
\newcommand{\Bound}{\textrm{\rm Bound}}
\newcommand{\cmdrewritesto}{\tikz[baseline=-0.25em] { \draw [-open triangle 45, line width=0.2pt] (0,0) -- (0.5,0); }\,}
\newcommand{\cmdrewriteequiv}{\tikz[baseline=-0.25em] { \draw [open triangle 45-open triangle 45, line width=0.2pt] (0,0) -- node [auto,yshift=-1.2mm] {$*$} (0.7,0); }\,}
\newcommand{\cmdrewritetrans}{\tikz[baseline=-0.25em] { \draw [-open triangle 45, line width=0.2pt] (0,0) -- node [auto,pos=0.3,yshift=-1.2mm] {$*$} (0.5,0); }\,}
\DeclareMathOperator{\rewritesto}{\cmdrewritesto}
\DeclareMathOperator{\rewriteequiv}{\cmdrewriteequiv}
\DeclareMathOperator{\rewritetrans}{\cmdrewritetrans}
\tikzstyle{cdiag}=[matrix of math nodes, row sep=2em, column sep=2em, text height=1.5ex, text depth=0.25ex,inner sep=0.5em]
\tikzstyle{arrow above}=[transform canvas={yshift=0.5ex}]
\tikzstyle{arrow below}=[transform canvas={yshift=-0.5ex}]
\newcommand{\NWbracket}[1]{%
\draw #1 +(-0.25,0.5) -- +(-0.5,0.5) -- +(-0.5,0.25);}
\newcommand{\NEbracket}[1]{%
\draw #1 +(0.25,0.5) -- +(0.5,0.5) -- +(0.5,0.25);}
\newcommand{\crun}[5]{
\ensuremath{#1 \overset{#2}{\longrightarrow} #3 \overset{#4}{\longrightarrow} #5}
}
\newcommand{\cspan}[5]{
\ensuremath{#1 \overset{#2}{\longleftarrow} #3
               \overset{#4}{\longrightarrow} #5}
}
\def\bR{\begin{color}{red}} 
\def\bB{\begin{color}{blue}}
\def\bM{\begin{color}{magenta}}
\def\bC{\begin{color}{cyan}}
\def\bW{\begin{color}{white}}
\def\bBl{\begin{color}{black}} 
\def\bG{\begin{color}{green}}
\def\bY{\begin{color}{yellow}}
\def\e{\end{color}}
\begin{document}

\maketitle

\begin{abstract}
  In recent years, diagrammatic languages have been shown to be a powerful and expressive tool for reasoning about physical, logical, and semantic processes represented as morphisms in a monoidal category. In particular, categorical quantum mechanics, or ``Quantum Picturalism'', aims to turn concrete features of quantum theory into abstract structural properties, expressed in the form of diagrammatic identities. One way we search for these properties is to start with a concrete model (e.g. a set of linear maps or finite relations) and start composing generators into diagrams and looking for graphical identities.
  
  Na\"ively, we could automate this procedure by enumerating all diagrams up to a given size and check for equalities, but this is intractable in practice because it produces far too many equations. Luckily, many of these identities are not primitive, but rather derivable from simpler ones. In 2010, Johansson, Dixon, and Bundy developed a technique called \textit{conjecture synthesis} for automatically generating conjectured term equations to feed into an inductive theorem prover. In this extended abstract, we adapt this technique to diagrammatic theories, expressed as graph rewrite systems, and demonstrate its application by synthesising a graphical theory for studying entangled quantum states.
\end{abstract}

\vspace{-3mm}\section{Introduction}\vspace{-2mm}
\label{sec:intro}

Monoidal categories can be thought of as theories of generalised (physical, logical, semantic, ...) processes. In particular, they provide an abstract setting for studying how processes (i.e. morphisms) behave when they are composed in time (via the categorical composition $\circ$) and in space (via the monoidal product $\otimes$). Categorical quantum mechanics, a program initiated by Abramsky and Coecke in 2004, takes this notion seriously by showing that many phenomena in quantum mechanics can be completely understood at the abstract level of monoidal categories carrying certain extra structure~\cite{AC2004}. This notion has also had success in reasoning about stochastic networks~\cite{CoeckeSpekkens2011}, illustrative ``toy theories'' of physics~\cite{EdwardsSpek2011}, and even linguistics~\cite{CoeckeLinguistics2010}. In most of these contexts, we make extensive use of spacial ``swap'' operations (symmetries) and temporal ``feedback'' operations (traces). Formally, this means we are working with a particular kind of monoidal category called a symmetric traced category.

Working with large compositions of morphisms in a symmetric traced category using traditional, term-based expressions quickly becomes unwieldy, as one attempts to describe an inherently two-dimensional structure using a (one-dimensional) term language. \textit{Diagrammatic} languages thus provide a natural alternative.

String diagrams represent morphisms as ``boxes'' with various inputs and outputs, and compositions as ``wires'' connecting those boxes. They provide a way of visualising composition that is both highly intuitive and completely rigorous. They were originated by Penrose in the 1970s as a way to describe contractions of (abstract) tensors~\cite{Penrose1971}. These abstract tensor systems closely resembled traced monoidal categories, so perhaps unsurprisingly, Joyal and Street were able to use string diagrams in 1991 to construct free symmetric traced categories~\cite{JS}. In the context of categorical quantum mechanics, complementary observables~\cite{Coecke2008}, quantum measurements~\cite{DuncanPerdrix2010}, and entanglement~\cite{KissingerICALP2010} can be studied using string diagrams and diagrammatic identities.

Joyal and Street's formalisation of string diagrams uses topological graphs with some extra structure. While this method comes very close to the intuitive notion of a string diagram, it is not obvious how one could automate the creation and manipulation of these topological objects in a computer program. For that reason, Dixon, Duncan, and Kissinger developed a discrete representation of string diagrams called \textit{string graphs}. Unlike their continuous counterparts, it is straightforward to automate the manipulation of string graphs using graph rewrite systems. Using a graph rewrite system, we can draw conclusions about any concrete model of that system. Since the amount of data needed to represent a morphism concretely often grows exponentially in the number of inputs and outputs, we can gain potentially huge computational benefits by manipulating these morphisms using rewrite rules instead of concrete calculation.

We might also ask if this can be done in reverse. Consider a situation where we have a collection of generators as well as concrete realisations of those generators (e.g. as linear maps, finite relations, etc.) and we wish to discover the algebraic structure of these generators. One way to find new rules is to start enumerating all string graphs involving those generators, evaluating them concretely, and checking for identities. This process becomes intractable very quickly, as the number of distinct string graphs over a set of generators grows exponentially with size. However, in 2010, Johansson, Dixon, and Bundy provided a clever way of avoiding enumerating redundant rewrite rule candidates for a term rewrite theory, which they called \textit{conjecture synthesis}~\cite{Johansson:2010tk}. This technique keeps a running collection of rewrite rules and only enumerates terms that are \textit{irreducible} with respect to this collection of rules. This simple condition is surprisingly effective at curbing the exponential blow-up of the rewrite system synthesis procedure, and provides the basis for a piece of software developed by the authors called IsaCoSy~\cite{IsaCoSy} (for ISAbelle COnjecture SYnthesis).

The main contribution of this paper is an algorithm that generates new graphical theories using an adapted conjecture synthesis procedure. This adapted procedure has been implemented in a tool called QuantoCoSy, built on string graph rewriting platform called Quantomatic~\cite{Quantomatic}. We provide an example of applying QuantoCoSy to generate a graphical theory used in the study of quantum entanglement. We use as a basis the string graph (aka open-graph) formalism developed in~\cite{DixonKissinger2010}.

After briefly reviewing string graphs and string graph rewriting in section~\ref{sec:string-graphs}, we show in section~\ref{sec:interpreting} how, given a valuation of the generators of a string graph, the graph itself can be evaluated as a linear map using tensor contraction. The graphical conjecture synthesis procedure is then described in section~\ref{sec:quanto-cosy}. In section~\ref{sec:application}, we look at a test case where our theory synthesis software, QuantoCosy, is used to synthesise a simple graphical theory of entangled states.

\vspace{-3mm}\section{Background: String Graphs}\vspace{-2mm}
\label{sec:string-graphs}

In this section, we summarise the construction of the category $\catSGraph_T$ of string graphs parametrised by a monoidal signature. Details of how this can be done formally in the context of adhesive categories can be found in~\cite{DixonKissinger2010}. Our goal in defining string graphs is to represent diagrammatic theories using graph rewrite systems. To do this, we need to re-cast (topological) string diagrams as typed graphs. We do this by replacing \textit{wires} with chains of special vertices called \textit{wire-vertices}. We also introduce \textit{node-vertices}, which represent ``logical'' nodes (aka boxes) in the diagram.
\begin{equation}\label{eq:string-graph}
  \scalebox{0.8}{%
\beginpgfgraphicnamed{string_diagram_to_graph}
\begin{tikzpicture}[string graph]
	\begin{pgfonlayer}{nodelayer}
		\node [style=none] (0) at (-4.5, 3) {};
		\node [style=none] (1) at (-3.5, 3) {};
		\node [style=sg wire vertex] (2) at (3.75, 3) {};
		\node [style=none, font=\tiny] (3) at (4, 3) {$A$};
		\node [style=sg wire vertex] (4) at (4.75, 3) {};
		\node [style=none, font=\tiny] (5) at (5, 3) {$B$};
		\node [style=none] (6) at (-4.5, 1.75) {};
		\node [style=none] (7) at (-3.5, 1.75) {};
		\node [style=square box, minimum width=1.5 cm] (8) at (-4, 1.5) {$f$};
		\node [style=labelled sg vertex] (9) at (4.25, 1.75) {$f$};
		\node [style=none] (10) at (-4.5, 1.25) {};
		\node [style=none] (11) at (-3.5, 1.25) {};
		\node [style=none, font=\tiny] (12) at (3, 0.75) {$C$};
		\node [style=sg wire vertex] (13) at (3.25, 0.75) {};
		\node [style=none, font=\tiny] (14) at (6.25, 1) {$C$};
		\node [style=none] (15) at (-1.75, 0.5) {};
		\node [style=sg wire vertex] (16) at (6.25, 0.75) {};
		\node [style=sg wire vertex] (17) at (4.5, 0.5) {};
		\node [style=none, font=\tiny] (18) at (4.75, 0.5) {$D$};
		\node [style=none] (19) at (-5.5, -0.25) {};
		\node [style=square box, minimum width=1.5 cm] (20) at (-5.5, -0.5) {$g$};
		\node [style=none] (21) at (0.5, 0) {\large $\mapsto$};
		\node [style=labelled sg vertex] (22) at (2.25, -0.25) {$g$};
		\node [style=none] (23) at (-6, -0.75) {};
		\node [style=none] (24) at (-5, -0.75) {};
		\node [style=none] (25) at (-1.75, -0.75) {};
		\node [style=sg wire vertex] (26) at (4.5, -1) {};
		\node [style=none, font=\tiny] (27) at (4.75, -1) {$D$};
		\node [style=sg wire vertex] (28) at (6.25, -0.5) {};
		\node [style=sg wire vertex] (29) at (3.25, -1) {};
		\node [style=none, font=\tiny] (30) at (6.25, -0.75) {$C$};
		\node [style=none, font=\tiny] (31) at (1.5, -1.75) {$E$};
		\node [style=sg wire vertex] (32) at (1.75, -1.75) {};
		\node [style=none, font=\tiny] (33) at (3.25, -1.25) {$F$};
		\node [style=none] (34) at (-5.75, -2.25) {};
		\node [style=none] (35) at (-4.25, -2.25) {};
		\node [style=sg wire vertex] (36) at (4.5, -2) {};
		\node [style=none, font=\tiny] (37) at (4.75, -2) {$F$};
		\node [style=square box, minimum width=1.5 cm] (38) at (-5, -2.5) {$h$};
		\node [style=labelled sg vertex] (39) at (3, -2.75) {$h$};
		\node [style=none] (40) at (-3, -2.25) {};
		\node [style=none] (41) at (-3, -3) {};
		\node [style=sg wire vertex] (42) at (4.5, -3) {};
		\node [style=none, font=\tiny] (43) at (4.75, -3) {$F$};
		\node [style=none, wire label] (44) at (-5, 2.5) {$A$};
		\node [style=none, wire label] (45) at (-3, 2.5) {$B$};
		\node [style=none, wire label] (46) at (-5.5, 0.5) {$C$};
		\node [style=none, wire label] (47) at (-3.25, 0) {$D$};
		\node [style=none, wire label] (48) at (-2.5, -2.5) {$F$};
		\node [style=none, wire label] (49) at (-6.5, -1.5) {$E$};
	\end{pgfonlayer}
	\begin{pgfonlayer}{edgelayer}
		\draw [style=sg diredge, in=-15, out=15, looseness=1.25] (28) to (16);
		\draw [style=sg diredge] (22) to (32);
		\draw [style=diredge, in=90, out=-90] (11.center) to (35.center);
		\draw [style=sg diredge] (36) to (42);
		\draw [style=sg diredge] (13) to (22);
		\draw [style=sg diredge] (9) to (17);
		\draw [style=diredge, in=180, out=180, looseness=1.75] (15.center) to (25.center);
		\draw [style=sg diredge] (26) to (39);
		\draw [in=90, out=-90] (24.center) to (40.center);
		\draw [style=diredge, in=90, out=-90] (10.center) to (19.center);
		\draw [in=0, out=0, looseness=1.75] (15.center) to node[wire label]{$C$} (25.center);
		\draw [style=sg diredge] (29) to (36);
		\draw [style=sg diredge, in=165, out=-165, looseness=1.25] (16) to (28);
		\draw [style=sg diredge] (22) to (29);
		\draw [style=sg diredge] (2) to (9);
		\draw [style=sg diredge] (4) to (9);
		\draw [style=diredge] (0.center) to (6.center);
		\draw [style=sg diredge] (32) to (39);
		\draw [style=sg diredge] (9) to (13);
		\draw [style=diredge] (40.center) to (41.center);
		\draw [style=diredge, in=90, out=-90] (23.center) to (34.center);
		\draw [style=diredge] (1.center) to (7.center);
		\draw [style=sg diredge] (17) to (26);
	\end{pgfonlayer}
\end{tikzpicture}}
\endpgfgraphicnamed}
\end{equation}

A (small, strict) monoidal signature $T$ defines a set of generators that can be interpreted in a monoidal category. Let $w(X)$ be the set of lists over a set $X$. $T$ consists of a set $\mathcal M$ of \textit{morphisms}, a set $\mathcal O$ of objects and functions $\dom, \cod : \mathcal M \rightarrow w(\mathcal O)$ that assign input and output types for each morphism. Note in particular that $T$ says nothing about composition.

\begin{definition}
  The category $\catSGraph_T$ has as objects string graphs with node-vertex and wire-vertex types given by $T$, and as arrows graph homomorphisms respecting these types.
\end{definition}

From hence forth, we will refer to chains of wire vertices simply as wires, when there is no ambiguity. There are a few things to note here. Firstly, wire-vertices carry the (object) type of a wire, so every connection between two node-vertices must contain at least one node-vertex. We do not allow wires to split or merge, so wire-vertices must have at most one input and one output. When our generators are non-commutative, we distinguish inputs and outputs using edge types ($\textrm{in}_1$, $\textrm{in}_2$, $\textrm{out}_1$, $\textrm{out}_2$, $\ldots$).

This category can be defined as a full subcategory of the slice category $\catGraph / G_T$, where $G_T$ is called the \textit{derived typegraph} of a monoidal signature $T$. Furthermore, $\catSGraph_T$ is a \textit{partial adhesive category}, as defined in~\cite{DixonKissinger2010}. Such a category provides the mechanisms we need to perform graph matching and rewriting in the following sections. For details, see~\cite{DixonKissinger2010}.

\begin{definitions}
  For a string graph $G$, let $\omega(G)$ be the set of wire-vertices and $\eta(G)$ the set of node-vertices. If a wire-vertex has no in-edges, it is called an \emph{input}. We write the set of inputs of a string graph $G$ as $\In(G)$. Similarly, a wire-vertex with no out-edges is called an \emph{output}, and the set of outputs is written $\Out(G)$. The inputs and outputs define a string graph's \emph{boundary}, written $\Bound(G)$. If a boundary vertex has no in-edges and no out-edges, (it is both and input and output) it is called an \emph{isolated wire-vertex}.
\end{definitions}





\vspace{-2mm}\subsection{Plugging and Rewriting for String Graphs}\vspace{-1mm}
\label{sec:rewriting-string-graphs}

There are two important, related operations we wish to apply to string graphs. The first is plugging, where input and output wire-vertices are glued together, and the second is string graph rewriting, where parts of the string graph are cut out and replaced with other graphs.

\begin{definition}\rm
  For a string graph $G$ and wire-vertices $x \in \In(G)$ and $y \in \Out(G)$, let $G/_{\!(x,y)}$ be a new string graph with the wire-vertices $x$ and $y$ identified. This is called the \textit{plugging} of $(x,y)$ in $G$, and the new vertex $(x\sim y)\in G/_{(x,y)}$ is called the \textit{plugged wire-vertex}.
\end{definition}

We can compose string graphs by performing pluggings on the disjoint union: $(G + H)/_{\!x,y}$ for $x \in \Out(G)$, $y \in \In(H)$. We can plug many wires together by performing many pluggings one after another. Equivalently, we can perform a pushout along a discrete graph, respecting certain conditions. When there can be no confusion, we also call this a plugging.

\begin{definition}\rm
  A pushout of a span \cspan{G}{p}{K}{q}{H} is called a \textit{plugging} if $K$ is a disconnected graph of wire-vertices, $p$ and $q$ are mono, and for all $k\in K$, $p(k) \in \Bound(G)$, $q(k) \in \Bound(H)$ and $p(k) \in \In(G) \Leftrightarrow q(k) \in \Out(H)$.
\end{definition}

We perform rewrites on string graphs using the \textit{double pushout} (DPO) graph rewriting technique. We first define a string graph rewrite rule as a pair of string graphs sharing a boundary. It will be convenient to formalise this as a particular kind of span.

\begin{definition}\label{def:rw-rule} \rm
  A \textit{string graph rewrite rule} $L \rightarrow R$ is a span of monomorphisms \cspan{L}{b_1}{B}{b_2}{R} such that $b_1(B) = \Bound(L)$, $b_2(B) = \Bound(R)$ and for all $b\in B$, $b_1(b) \in \In(L) \Leftrightarrow b_2(b) \in \In(R)$.
\end{definition}


To perform rewriting, we first introduce a notion of matchings. These are special monomorphisms that only connect to the rest of the graph via their boundary.

\begin{definition}\rm
  A \textit{matching} of a string graph $L$ on another string graph $G$ is a monomorphism $m : L \rightarrow G$ such that whenever an edge not in the image of $m$ is adjacent to $m(v)$ for some vertex $v$ in $L$, $v$ must be in the boundary $\Bound(L)$ of $L$. When such a matching exists, we say $L$ (or any rule with $L$ as its LHS) \textit{matches} $G$.
\end{definition}

Note that, for the class of rewrite rules considered (i.e. those of the form of Definition~\ref{def:rw-rule}), this corresponds to the notion of matching defined by Ehrig et al~\cite{Ehrig2008}. Once a matching is found, the occurrence of $L$ in $G$ can be replaced with $R$. However, some care must be taken to ensure that $R$ is reconnected to the remainder of $G$ in the same location as $L$ was previously. We begin by removing the \textit{interior} of $L$ in $G$, i.e. the part of $L$ that is not in the boundary. We do this by finding a \textit{context} graph $G'$ such that when $L$ and $G'$ are plugged together along the boundary of $L$, the total graph $G$ is obtained. This is known as computing the \textit{pushout complement} of \crun{B}{b_1}{L}{m}{G}. Since $G$ is the result of gluing $L$ and $G'$ together along $B$, we can replace $L$ with $R$ by performing a second plugging, this time of $R$ and $G'$ along $B$.
\begin{equation}\label{eq:sg-dpo}
  \begin{tikzpicture}
    \matrix (m) [cdiag] {
      L & B  & R \\
      G & G' & H \\
    };
    \path [arrs]
      (m-1-2) edge [left hook-latex] (m-1-1)
      (m-1-1) edge node [swap] {$m$} (m-2-1)
      (m-1-2) edge [right hook-latex] (m-1-3)
      (m-2-2) edge (m-2-1)
      (m-2-2) edge (m-2-3)
      (m-1-2) edge node {$m'$} (m-2-2)
      (m-1-3) edge (m-2-3);
    \NEbracket{(m-2-1)}
    \NWbracket{(m-2-3)}
  \end{tikzpicture}
\end{equation}

In other words, we express $G$ as a plugging of two graphs $L$ and $G'$, then compute $H$ by plugging $R$ into $G'$. This completes the rewrite of $G$ into $H$. Note that even in a category with all pushouts, pushout complements need not exist or be unique. However, the following theorem, proved in \cite{DixonKissinger2010}, shows that DPO rewriting is always well-defined for string graph matchings and rewrite rules.

\begin{theorem}
  Let \cspan{L}{b_1}{B}{b_2}{R} be a boundary span and $m : L \rightarrow G$ a matching. Then \crun{B}{b_1}{L}{m}{G} has a unique pushout complement $G'$ and both of the pushout squares in diagram (\ref{eq:sg-dpo}) exist and are preserved by the embedding of $\catSGraph_T$ into $\catGraph/G_T$.
\end{theorem}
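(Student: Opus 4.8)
The plan is to run the entire argument in a genuinely adhesive ambient category and then verify that the objects produced fall back inside the full subcategory $\catSGraphT$. First I would recall that $\catGraph$ is adhesive and that slices of adhesive categories are adhesive, so $\catGraph/G_T$ is adhesive and the inclusion $\catSGraphT \hookrightarrow \catGraph/G_T$ is full. In an adhesive category the standard double-pushout machinery is available: pushouts along monomorphisms always exist, the pushout complement of a composable pair \crun{B}{b_1}{L}{m}{G} with $b_1$ monic is unique up to isomorphism whenever it exists, and its existence is governed by the usual gluing condition.

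The first claim then reduces to checking the gluing condition for \crun{B}{b_1}{L}{m}{G} in $\catGraph/G_T$. Since $m$ is monic the identification condition holds automatically, so only the dangling condition remains: no edge outside the image of $m$ may be incident to a vertex of $m(L)$ lying outside $m(b_1(B)) = m(\Bound(L))$. This is precisely the defining property of a matching, which forces any such vertex into $\Bound(L)$. Hence the pushout complement $G'$ exists, and it is unique because $b_1$ is monic. For the right-hand square, $b_2 : B \rightarrow R$ is monic, so the pushout $H$ of \cspan{R}{b_2}{B}{}{G'} exists in $\catGraph/G_T$ because adhesive categories have all pushouts along monos.

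It remains to show that $G'$ and $H$ are genuine string graphs, which is where the real work lies. For $G'$, I would note that as a pushout complement it is (up to iso) the subgraph of $G$ obtained by deleting the images of the interior vertices of $L$ together with their incident edges; every string-graph constraint --- each wire-vertex has at most one in- and one out-edge, and every node-to-node connection runs through a wire-vertex --- is inherited by a subgraph, since deletion can only lower arities and cannot introduce new direct node-to-node adjacencies. For $H$, I would use that it is a \emph{plugging} of the string graphs $R$ and $G'$ along $B$: because $B = \Bound(L) = \Bound(R)$ is a discrete set of wire-vertices and the boundary-span and plugging conditions glue each input to an output, every plugged wire-vertex acquires at most one in- and one out-edge and no two node-vertices become directly adjacent, so the result is again a string graph.

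Preservation is then formal: for a full subcategory, any colimit computed in the ambient category whose apex already lies in the subcategory is automatically a colimit there and is preserved by the inclusion, so both squares are pushouts (indeed pluggings) in $\catSGraphT$. I expect the main obstacle to be the closure argument of the previous paragraph --- verifying in detail that the plugging $H = R +_B G'$ never violates the at-most-one-input/one-output condition on a plugged wire-vertex, since this is exactly the point at which the input/output compatibility clauses in the definitions of matching, boundary span, and plugging must all be invoked together.
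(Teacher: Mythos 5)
First, a point of comparison that matters here: this paper never proves the theorem at all --- it is quoted as ``proved in \cite{DixonKissinger2010}'' --- so the only proof to measure your attempt against is the one in that reference. Your outline is essentially that proof: work in the slice $\catGraph/G_T$ (adhesive, being a slice of an adhesive category), get existence of the pushout complement from the gluing condition --- identification vacuous since $m$ is mono, dangling literally the matching condition once one uses $b_1(B)=\Bound(L)$ --- get uniqueness from adhesivity, show the two pushout objects are again string graphs, and use fullness of $\catSGraphT \hookrightarrow \catGraph/G_T$ to conclude both squares live in $\catSGraphT$ and are preserved. All of these categorical steps are correct as you state them.

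The genuine gap is exactly the step you defer to the end: the degree count showing that $H = R +_B G'$ is a string graph. It is not merely fiddly --- from the definitions as this paper literally states them it is \emph{false}, because Definition~\ref{def:rw-rule} only requires $b_1(b)\in\In(L) \Leftrightarrow b_2(b)\in\In(R)$ and has no output clause. Concretely: let $L$ be a single isolated wire-vertex $x$ (so $x\in\In(L)\cap\Out(L)$), $B=\{b\}$ with $b_1(b)=x$, and let $R$ consist of a wire-vertex $y$ whose unique edge points into a node-vertex with one input and no outputs; then $\Bound(R)=\In(R)=\{y\}$ and all conditions of Definition~\ref{def:rw-rule} hold with $b_2(b)=y$. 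Now match $x$ to a wire-vertex of $G$ having both an in-edge and an out-edge outside the image of $m$ (legitimate, since $x\in\Bound(L)$). The pushout complement is $G'=G$, and the second pushout merges $y$ with that vertex, which thereby acquires \emph{two} out-edges: the result is an object of $\catGraph/G_T$ but not a string graph, so no pushout square in $\catSGraphT$ can be preserved by the embedding. The missing hypothesis is the symmetric clause $b_1(b)\in\Out(L)\Leftrightarrow b_2(b)\in\Out(R)$, which \emph{is} part of the definition of rewrite rule in \cite{DixonKissinger2010}; only isolated wire-vertices can violate it given the paper's stated conditions, which is why the gap is easy to miss. With that clause restored, your count closes by cases on $b\in B$: any edge of $b_1(b)$ inside $L$ is deleted in $G'$ (note $G'$ must discard \emph{all} images of edges of $L$, including edges between two boundary vertices --- your description ``interior vertices together with their incident edges'' misses these) and is replaced by the unique corresponding edge of $b_2(b)$ in $R$; in the remaining directions $b_2(b)$ has no edge in $R$, so the at-most-one external edge of $m(b_1(b))$ surviving in $G'$ is the only contribution. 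So to complete the proof you must import the stronger rewrite-rule definition from \cite{DixonKissinger2010} (or add the output clause as an explicit hypothesis); the rest of your argument then goes through.
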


We will also find the following lemma useful when combining matching and string graph enumeration. It shows that performing a plugging only creates new matchings \textit{local} to the plugged vertex.

\begin{lemma}\label{lem:match-local}
  Let $x \in \In(G)$, $y \in \Out(G)$, then let $m : L \rightarrow G/_{\!(x,y)}$ be a matching such that the plugged vertex $x\sim y$ is not in the image of $m$. Then there exists a matching $m' : L \rightarrow G$.
\end{lemma}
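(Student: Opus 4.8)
The plan is to use the canonical quotient map $q : G \to G/_{(x,y)}$ induced by the plugging and to lift the matching $m$ back along it. First I would record the elementary properties of $q$: it is a graph homomorphism (indeed a morphism in $\catSGraph_T$) that acts as the identity on every vertex other than $x$ and $y$, both of which it sends to the plugged vertex $x\sim y$; and it is a \emph{bijection on edges}, since plugging merely relabels the endpoints of the out-edges of $x$ and the in-edges of $y$, neither adding nor deleting edges. In particular $q$ is injective on $V(G)\setminus\{x,y\}$ and bijective on $E(G)$.

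Since by hypothesis $x\sim y \notin \mathrm{im}(m)$, each vertex $m(v)$ lies in $V(G/_{(x,y)})\setminus\{x\sim y\}$, the part of the codomain on which $q$ is invertible. I would therefore define $m' : L \to G$ by $m'(v) := q^{-1}(m(v))$ on vertices and $m'(e) := q^{-1}(m(e))$ on edges, using the edge-bijection. A routine check, tracking sources, targets, and edge-types, shows that $m'$ is a well-defined (typed) graph homomorphism satisfying $q\circ m' = m$. It is moreover a monomorphism: on vertices its image lands in $V(G)\setminus\{x,y\}$, where $q$ is injective, so injectivity of $m$ forces injectivity of $m'$; on edges this is immediate from $q$ being an edge-bijection.

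The substance of the argument is then to verify that $m'$ is a \emph{matching}. Suppose $e\in E(G)$ is an edge not in $\mathrm{im}(m')$ that is adjacent to $m'(v)$ for some vertex $v$ of $L$. Since $q$ preserves incidence and is an edge-bijection, $q(e)$ is an edge of $G/_{(x,y)}$ adjacent to $q(m'(v)) = m(v)$, and $q(e)\notin \mathrm{im}(m)$, for otherwise $q(e)\in q(\mathrm{im}(m'))$ would give $e\in\mathrm{im}(m')$ by injectivity of $q$ on edges. The matching condition for $m$ now yields $v\in\Bound(L)$, which is exactly what the matching condition for $m'$ demands.

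The one point requiring care is the behaviour around the plugged vertex, because plugging is precisely where new adjacencies are created: every edge incident to $x\sim y$ in $G/_{(x,y)}$ corresponds under $q$ to an edge incident to $x$ or to $y$ in $G$. The observation that makes these harmless is that, since $m$ is a graph monomorphism whose image avoids $x\sim y$, no edge incident to $x\sim y$ can lie in $\mathrm{im}(m)$ (its endpoint $x\sim y$ would have to lie in $\mathrm{im}(m)$). Such edges are therefore always among the ``free'' edges controlled by the matching condition of $m$, so the transfer in the previous paragraph goes through uniformly. I expect this to be the main obstacle only in the sense of bookkeeping: the conceptual content is just that $q$ is an edge-bijective quotient and that the matching condition is reflected along it.
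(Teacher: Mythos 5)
Your proposal is correct and follows essentially the same route as the paper: both use the quotient map $q : G \rightarrow G/_{\!(x,y)}$, observe that since $\mathrm{im}(m)$ avoids the plugged vertex $q$ restricts to an isomorphism over that image, and lift $m$ through this restriction to obtain $m'$. Your explicit, element-wise construction of $q^{-1}\circ m$ and the verification of the matching condition simply spell out what the paper's proof compresses into ``pulling back $q$ over $m$'' and ``it is straightforward to show''.
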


\begin{proof}
  Let $q : G \rightarrow G/_{\!(x,y)}$ be the quotient map. Then, pulling back $q$ over $m$ is just the restriction of $q$ to the image of $m$. Since the image of $m$ does not contain the plugged vertex, $q$ restricts to an isomorphism $r$. It is straightforward to show that $m' \circ r^{-1}$ is a matching.
\end{proof}


\begin{definition}
  A set of string graph rewrite rules is called a \textit{string graph rewrite system}. For a string graph rewrite system $\mathbb S$, we write $G \rewritesto_{\mathbb S} H$ if there exists a rule $L \rewritesto R \in \mathbb S$ and a matching $m : L \rightarrow G$ such that $G$ can be rewritten to $H$ using the DPO diagram given by diagram (\ref{eq:sg-dpo}). Let $\rewritetrans_{\mathbb S}$ be the reflexive, transitive closure of $\rewritesto_{\mathbb S}$ and $\rewriteequiv_{\mathbb S}$ be the associated equivalence relation.
\end{definition}


The vigilant reader will notice that we allow any number of wire-vertices to occur within a wire (all with the same type). However, the number of wire-vertices makes no semantic difference in the interpretation of a string graph. Therefore, we will consider two string graphs to be semantically equivalent if they only differ in the length of their wires. This equivalence relation is called \textit{wire-homeomorphism}. We can formalise wire-homeomorphism as a string graph rewrite system.

\begin{definition}\label{def:homeo-rewrite-system}\rm
  For a monoidal signature $T = (\mathcal O, \mathcal M, \dom, \cod)$, the rewrite system $\mathbb H$ is defined as follows. For every $X \in \mathcal O$, we define a loop contraction rule $h^L_X$ and a wire contraction rule $h^W_X$. For every $f \in \mathcal M$ and $0 \leq i < \textrm{Length}(\dom(f))$, $0 \leq j < \textrm{Length}(\cod(f))$, we define an input contraction rule $h^I_{f,i}$ and an output contraction rule $h^O_{f,j}$.
  \begin{center}
    \scalebox{0.8}{%
\beginpgfgraphicnamed{wire_homeo1}
\begin{tikzpicture}[string graph]
	\begin{pgfonlayer}{nodelayer}
		\node [style=sg wire vertex] (0) at (1, 0.75) {};
		\node [style=none, font=\tiny] (1) at (1.25, 0.75) {$X$};
		\node [style=none, font=\tiny] (2) at (-3.75, 0.75) {$X$};
		\node [style=sg wire vertex] (3) at (-3.75, 0.5) {};
		\node [style=none, font=\small, yshift=-0.7 mm] (4) at (-2.5, 0.75) {$h^L_X$};
		\node [style=none, font=\tiny] (5) at (-1.25, 0.75) {$X$};
		\node [style=none, font=\small, yshift=-0.7 mm] (6) at (2.5, 0.75) {$h^W_X$};
		\node [style=sg wire vertex] (7) at (3.75, 0.75) {};
		\node [style=none, font=\tiny] (8) at (4, 0.75) {$X$};
		\node [style=sg wire vertex] (9) at (-1.25, 0.5) {};
		\node [style=none] (10) at (-2.5, 0) {$\rewritesto$};
		\node [style=sg wire vertex] (11) at (1, 0) {};
		\node [style=none, font=\tiny] (12) at (1.25, 0) {$X$};
		\node [style=none] (13) at (2.5, 0) {$\rewritesto$};
		\node [style=sg wire vertex] (14) at (-3.75, -0.5) {};
		\node [style=none] (15) at (-1.25, -0.25) {};
		\node [style=sg wire vertex] (16) at (3.75, -0.75) {};
		\node [style=none, font=\tiny] (17) at (4, -0.75) {$X$};
		\node [style=none, font=\tiny] (18) at (-3.75, -0.25) {$X$};
		\node [style=sg wire vertex] (19) at (1, -0.75) {};
		\node [style=none, font=\tiny] (20) at (1.25, -0.75) {$X$};
	\end{pgfonlayer}
	\begin{pgfonlayer}{edgelayer}
		\draw [style=sg diredge] (7) to (16);
		\draw [in=180, out=-150, looseness=1.50] (9) to (15.center);
		\draw [style=sg diredge] (0) to (11);
		\draw [style=sg diredge, in=-30, out=0, looseness=1.50] (15.center) to (9);
		\draw [style=sg diredge] (11) to (19);
		\draw [style=sg diredge, bend right=75, looseness=1.25] (14) to (3);
		\draw [style=sg diredge, bend right=75, looseness=1.25] (3) to (14);
	\end{pgfonlayer}
\end{tikzpicture}}
\endpgfgraphicnamed\qquad\quad%
\beginpgfgraphicnamed{wire_homeo2}
\begin{tikzpicture}[string graph]
	\begin{pgfonlayer}{nodelayer}
		\node [style=sg wire vertex] (0) at (-5.25, 1.25) {};
		\node [style=none, font=\tiny] (1) at (-5, 1.25) {$X$};
		\node [style=sg wire vertex] (2) at (-1.75, 1.25) {};
		\node [style=none] (3) at (-1.25, 1.25) {\,...};
		\node [style=sg wire vertex] (4) at (-0.75, 1.25) {};
		\node [style=sg wire vertex] (5) at (-0.25, 1.25) {};
		\node [style=none, font=\tiny] (6) at (0, 1.25) {$X$};
		\node [style=sg wire vertex] (7) at (-7, 1.25) {};
		\node [style=none] (8) at (-6.5, 1.25) {\,...};
		\node [style=sg wire vertex] (9) at (-6, 1.25) {};
		\node [style=sg wire vertex] (10) at (-5.25, 0.75) {};
		\node [style=none, font=\tiny] (11) at (-5, 0.75) {$X$};
		\node [style=none, font=\small, yshift=-0.7 mm] (12) at (-3.25, 0.75) {$h^I_{f,i}$};
		\node [style=none, font=\tiny, anchor=west, yshift=1 mm, xshift=-1 mm] (13) at (0, 0.5) {$\textrm{in}_{f,i}$};
		\node [style=none, font=\small, yshift=-0.7 mm] (14) at (6.75, 0.75) {$h^O_{f,j}$};
		\node [style=none, font=\tiny, anchor=west, yshift=1 mm, xshift=-1 mm] (15) at (-5, 0) {$\textrm{in}_{f,i}$};
		\node [style=none] (16) at (-3.25, 0) {$\rewritesto$};
		\node [style=labelled sg vertex] (17) at (-1, 0) {$f$};
		\node [style=none, font=\tiny, anchor=west, yshift=-1 mm, xshift=-1 mm] (18) at (5, 0) {$\textrm{out}_{f,j}$};
		\node [style=none] (19) at (6.75, 0) {$\rewritesto$};
		\node [style=labelled sg vertex] (20) at (-6, -0.25) {$f$};
		\node [style=none, font=\tiny, anchor=west, yshift=-1 mm, xshift=-1 mm] (21) at (7.5, -0.5) {$\textrm{out}_{f,j}$};
		\node [style=sg wire vertex] (22) at (-1.5, -1.25) {};
		\node [style=none] (23) at (-1, -1.25) {\,...};
		\node [style=sg wire vertex] (24) at (-0.5, -1.25) {};
		\node [style=sg wire vertex] (25) at (-6.5, -1.25) {};
		\node [style=none] (26) at (-6, -1.25) {\,...};
		\node [style=sg wire vertex] (27) at (-5.5, -1.25) {};
		\node [style=sg wire vertex] (28) at (9.25, -1.25) {};
		\node [style=sg wire vertex] (29) at (4.5, 1.25) {};
		\node [style=none] (30) at (4, 1.25) {\,...};
		\node [style=sg wire vertex] (31) at (3.5, 1.25) {};
		\node [style=sg wire vertex] (32) at (8.5, 1.25) {};
		\node [style=none] (33) at (3.5, -1.25) {\,...};
		\node [style=sg wire vertex] (34) at (9.5, 1.25) {};
		\node [style=labelled sg vertex] (35) at (4, 0.25) {$f$};
		\node [style=none] (36) at (8.75, -1.25) {\,...};
		\node [style=sg wire vertex] (37) at (4.75, -0.75) {};
		\node [style=none] (38) at (9, 1.25) {\,...};
		\node [style=sg wire vertex] (39) at (9.75, -1.25) {};
		\node [style=sg wire vertex] (40) at (3, -1.25) {};
		\node [style=sg wire vertex] (41) at (8.25, -1.25) {};
		\node [style=sg wire vertex] (42) at (4.75, -1.25) {};
		\node [style=labelled sg vertex] (43) at (9, 0) {$f$};
		\node [style=sg wire vertex] (44) at (4, -1.25) {};
		\node [style=none, font=\tiny] (45) at (5, -0.75) {$X$};
		\node [style=none, font=\tiny] (46) at (5, -1.25) {$X$};
		\node [style=none, font=\tiny] (47) at (8, -1.25) {$X$};
	\end{pgfonlayer}
	\begin{pgfonlayer}{edgelayer}
		\draw [style=sg diredge] (0) to (10);
		\draw [style=sg diredge] (9) to (20);
		\draw [style=sg diredge] (17) to (24);
		\draw [style=sg diredge] (10) to (20);
		\draw [style=sg diredge] (20) to (27);
		\draw [style=sg diredge] (2) to (17);
		\draw [style=sg diredge] (4) to (17);
		\draw [style=sg diredge] (5) to (17);
		\draw [style=sg diredge] (20) to (25);
		\draw [style=sg diredge] (7) to (20);
		\draw [style=sg diredge] (17) to (22);
		\draw [style=sg diredge] (37) to (42);
		\draw [style=sg diredge] (35) to (44);
		\draw [style=sg diredge] (34) to (43);
		\draw [style=sg diredge] (35) to (37);
		\draw [style=sg diredge] (29) to (35);
		\draw [style=sg diredge] (43) to (41);
		\draw [style=sg diredge] (43) to (28);
		\draw [style=sg diredge] (43) to (39);
		\draw [style=sg diredge] (31) to (35);
		\draw [style=sg diredge] (35) to (40);
		\draw [style=sg diredge] (32) to (43);
	\end{pgfonlayer}
\end{tikzpicture}}
\endpgfgraphicnamed}
  \end{center}
\end{definition}

It was shown in~\cite{DixonKissinger2010} that this is a confluent, terminating rewrite system. Formal forms are called \textit{reduced string graphs}, and contain only one wire-vertex on every wire.


\vspace{-3mm}\section{Interpreting String Graphs as Tensor Contractions}\vspace{-2mm}
\label{sec:interpreting}


Describing the contractions of many tensors was the main reason Penrose introduced string diagram notation, so perhaps unsurprisingly, there is a natural way to interpret a string graph as a contraction of tensors. We first give the monoidal signature $T = (\mathcal O, \mathcal M, \dom, \cod)$ a \textit{valuation} $v : T \rightarrow \catVect_{\mathbb C}$. This is a monoidal signature homomorphism from $T$ to $\catVect_{\mathbb C}$ that assigns to each of the objects $o \in \mathcal O$ a vector space and each of the morphisms $f \in \mathcal M$ a linear map $v(f)$. We can then regard the linear map $v(f)$ as a tensor with a lower index for every input to $f$ in $T$ and an upper index for every output.

A string graph can then be interpreted as a big tensor contraction by interpreting wire-vertices as identities (i.e. the Dirac delta tensor $\delta_i^j$), node-vertices as linear maps $v(f)$, for $f$ the type of the vertex, and edges as sums over an index.
\[ \scalebox{0.8}{%
\beginpgfgraphicnamed{string_graph_ex}
\begin{tikzpicture}[string graph]
	\begin{pgfonlayer}{nodelayer}
		\node [style=sg wire vertex] (0) at (-1.25, -1.75) {};
		\node [style=sg wire vertex] (1) at (-2.75, -1.75) {};
		\node [style=sg wire vertex] (2) at (-1, 2.25) {};
		\node [style=labelled sg vertex] (3) at (-0.5, 1.25) {$f$};
		\node [style=sg wire vertex] (4) at (1.75, 0.5) {};
		\node [style=labelled sg vertex] (5) at (-2, -0.75) {$g$};
		\node [style=none] (6) at (1.75, -0.25) {};
		\node [style=sg wire vertex] (7) at (0, -0.5) {};
		\node [style=sg wire vertex] (8) at (0, 2.25) {};
		\node [style=sg wire vertex] (9) at (-1.5, 0.5) {};
		\node [style=none] (10) at (-1.25, 2) {\footnotesize\color{gray} $k$};
		\node [style=none] (11) at (0.25, 2) {\footnotesize\color{gray} $l$};
		\node [style=none] (12) at (-1.25, -1.25) {\footnotesize\color{gray} $n$};
		\node [style=none] (13) at (-2.75, -1.25) {\footnotesize\color{gray} $m$};
		\node [style=none] (14) at (0, 0.5) {\footnotesize\color{gray} $o$};
		\node [style=none, xshift=-0.5 mm] (15) at (-1.25, 1) {\footnotesize\color{gray} $p$};
		\node [style=none, xshift=0.5 mm] (16) at (-1.5, 0) {\footnotesize\color{gray} $q$};
		\node [style=none, xshift=0.5 mm] (17) at (2.25, 0.5) {\footnotesize\color{gray} $r$};
	\end{pgfonlayer}
	\begin{pgfonlayer}{edgelayer}
		\draw [style=sg diredge] (2) to (3);
		\draw [style=sg diredge] (3) to (7);
		\draw [style=sg diredge] (3) to (9);
		\draw [style=sg diredge] (5) to (1);
		\draw [style=sg diredge] (5) to (0);
		\draw [in=180, out=-165, looseness=1.75] (4) to (6.center);
		\draw [style=sg diredge, in=-15, out=0, looseness=1.75] (6.center) to (4);
		\draw [style=sg diredge] (8) to (3);
		\draw [style=sg diredge] (9) to (5);
	\end{pgfonlayer}
\end{tikzpicture}}
\endpgfgraphicnamed}\quad \mapsto \quad
   \underbrace{\delta_{i_1}^k \delta_{i_2}^l}_{\textrm{inputs}}
   \underbrace{\delta_m^{j_1} \delta_n^{j_2} \delta_o^{j_3}}_{\textrm{outputs}}
   \underbrace{[v(f)]_{k,l}^{p,o}
   [v(g)]_q^{m,n}}_{\textrm{nodes}}
   \underbrace{\delta_p^q \delta_r^r}_{\textrm{wires}}
\]

We are using the Einstein summation convention (repeated indices are summed over), and we have labeled edges in the string graph with their corresponding indices. Like wire-vertices themselves, the $\delta$ maps are used mainly just for book-keeping. They maintain the correct order of inputs and outputs, define circles, and connect more interesting tensors together.

This gives the evaluation of a string diagram $G$ as a linear map $\llbracket G\rrbracket$ in $\catVect_{\mathbb C}$. This is a special case of a much more general construction. In~\cite{KissingerThesis}, Kissinger showed that string graphs could be used to form the free symmetric traced category over a monoidal signature. As a consequence, for \textit{any} symmetric traced category $\mathcal C$, a monoidal signature homomorphism $v : T \rightarrow \mathcal C$ lifts uniquely to an evaluation functor from the category of string graphs to $\mathcal C$.

\vspace{-3mm}\section{Conjecture Synthesis for String Graph Theories}\vspace{-2mm}
\label{sec:quanto-cosy}

We will now describe a synthesis procedure for graphical identities in the spirit of IsaCoSy~\cite{IsaCoSy}. A notable difference is that, rather than passing conjectured identities off to an inductive theorem prover, we simply evaluate them using a given valuation of the generators of the theory.

The key to this procedure is that it maintains a set of reduction rules $\mathbb S$ throughout, and avoids enumerating string graphs that are reducible with respect to $\mathbb S$. We refer to such graphs as \textit{redexes}.

The theory synthesis procedure takes as input a string-graph signature $T$ along with an $(m,n)$-tensor for every box in $T$ with $m$ inputs and $n$ outputs. It also takes an initial set of rewrite rules $\mathbb S$ and a reduction ordering $\kappa$. This is a function from string graphs to $\mathbb N$ where $G \cong H \Rightarrow \kappa(G) = \kappa(H)$ and $G \rewritesto_{\mathbb S} H \Rightarrow \kappa(G) > \kappa(H)$. We shall also chose $\kappa$ to be non-increasing in the number of node-vertices and wire-vertices in a string graph. We will also maintain a set $\mathbb K$ (initially empty) of \textit{congruences}, but these cannot be used for redex-elimination.

In the term case, a single round of synthesis is parametrised by two natural numbers: the maximum term size (or term depth) and the maximum number of free variables occurring in the term. For string graphs, we parametrise a run with four natural numbers: the number of inputs $M$, the number of outputs $N$, the maximum number of pluggings $P$, and the maximum number of node-vertices $Q$. Will shall refer to string graphs with $M$ inputs, $N$ outputs, and up to $P,Q$ pluggings and node-vertices as \textit{string graphs of size $(M,N,P,Q)$}.


First, we define a string graph enumeration procedure that avoids reducible expressions with respect to our initial rewrite system $\mathbb S$. Suppose we have a string graph signature $T = (\mathcal O, \mathcal M, \dom, \cod)$, then we can define a string graph for every morphism in $T$, called its generator. For each $f \in \mathcal M$, this is the smallest string graph containing a node-vertex of type $f$. For completeness, we also include the identity generator, which is just two connected wire-vertices. For an example of a set of generators, see (\ref{eq:ghz-w-gens}) in the next section.

We enumerate string graphs by starting with disconnected string graphs, i.e. disjoint unions of generators, and performing pluggings to connect separate components together.

\begin{definition}\rm
  For a string graph $G$, let $\mathfrak p(G)$ be the set $\In(G) \times \Out(G)$. A pair $(x,y) \in \mathfrak p(G)$ defines a particular plugging $G/_{\!(x,y)}$. Two pluggings in $(x,y),(x',y') \in \mathfrak p(G)$ are called \textit{similar} if there exists an isomorphism $\phi : G/_{\!(x,y)} \cong G/_{\!(x',y')}$ that is identity on node-vertices. Let $\mathfrak p_{(x,y)}(G) \subseteq \mathfrak p(G)$ be the set of all pairs $(x',y')$ similar to $(x,y)$.
\end{definition}

Let $\mathcal D(M,N,Q)$ be the set of all disconnected string graphs with $M$ inputs, $N$ outputs and up to $Q$ node-vertices. The procedure {\sc ENUM} takes as input a graph, a set of (distinguishable) pluggings and the number of pluggings left to do. Note that each plugging decreases $|\In(G)|$ and $|\Out(G)|$ by $1$. So, if we start with $G \in \mathcal D(M+p,N+p,Q)$ and before $p$ pluggings, we will get a string graph with $M$ inputs and $N$ outputs.

\begin{center}
  \begin{algorithmic}[1]
    \Procedure{ENUM}{$\Pi$, $G$, $p$}
      \If{$p = 0$}
        \State{save $G$}
        \State{\textbf{exit}}
      \ElsIf{$\Pi = \{\}$}
        \State{\textbf{exit}}
      \EndIf
      \State{$\textbf{let }(x,y) \in \Pi$}
      \If{no rule in $\mathbb S$ matches $G/_{\!(x,y)}$, local to $(x,y)$}
      \label{ln:redex-guard}
        \State{{\sc ENUM}($\Pi - \{(x,y)\}$, $G/_{\!(x,y)}$, $p-1$)}
        \label{ln:do-plug}
      \EndIf
      \State{$\textbf{let }\Pi' = \Pi - \mathfrak p_{(x,y)}(G)$}
      \label{ln:similar}
      \State{{\sc ENUM}($\Pi'$, $G$, $p$)}
      \label{ln:dont-plug}
    \EndProcedure
    \State{}
    \ForAll{$0 \leq p \leq P$, $G \in \mathcal D(M+p,N+p,Q)$}
      \State{{\sc ENUM}($\mathfrak p(G)$, $G$, $p$)}
    \EndFor
  \end{algorithmic}
\end{center}

The procedure {\sc ENUM} recurses down two branches, one where we decide to do a particular plugging $(x,y) \in \Pi$ (line \ref{ln:do-plug}), and one where we decide not to (line \ref{ln:dont-plug}). Line~\ref{ln:similar} prevents us from deciding not to do a certain plugging then later deciding to do one that is similar, as this will enumerate redundant graphs.

The crucial redex-elimination step in the algorithm occurs at line \ref{ln:redex-guard}. Recall that a rule $L \rewritesto R$ matches $G/_{\!(x,y)}$ \textit{local to $(x,y)$} if the matching $m : L \rightarrow G/_{\!(x,y)}$ contains the plugged vertex $(x\sim y) \in G/_{\!(x,y)}$ in its image. Assuming each of our generators is irreducible, it suffices to consider only matches of rules local to the most recent plugging in order to eliminate all reducible expressions. By Lemma~\ref{lem:match-local}, if a rule has a matching on $G/_{\!(x,y)}$ that is not local to $(x,y)$, it already has a matching on $G$. Therefore, the condition on line \ref{ln:redex-guard} guarantees no redexes, with respect to $\mathbb S$, will be enumerated.

Once all of the irreducible graphs of size $(M,N,P,Q)$ have been enumerated, we update the rewrite system as follows. First, we evaluate the string graphs as linear maps and organise them into equivalence classes, up to scalar factors and permutations of inputs and outputs. We then filter out any remaining isomorphic graphs in each equivalence class and identify the set of minimal string graphs $C' \subseteq C$ with respect to $\kappa$. Choose a representative $G_0$ of $C'$. Finally, we add new reductions $G \rewritesto G_0$ to $\mathbb S$ for all $G \in C - C'$. Add congruences $G' \rewritesto G_0$ and $G_0 \rewritesto G'$ for all $G' \in C' - \{ G_0 \}$ to $\mathbb K$.

We postpone filtering out isomorphic string graphs until after enumeration because tensor contraction is fast, and two string graphs will not be isomorphic unless they are in the same equivalence class. In order to get a well-behaved rewrite system, we should choose $\kappa$ so that there are very few congruences, if any. Obviously, if we choose $\kappa$ to be strict (for all $G \not\cong H$, $\kappa(G) < \kappa(H)$ or $\kappa(G) > \kappa(H)$), there will be no congruences. While strict reduction orderings for graphs are much more difficult to compute than their term analogues, this may be tractable if we adapted our graph enumeration procedure to only produce canonical representatives of isomorphism classes of string graphs (cf.~\cite{Goldberg1992,McKay1998}).

Once a single run of the synthesis procedure is complete, we can re-run the procedure using larger values of $M$, $N$, $P$, and $Q$ as well as the updated rewrite system $\mathbb S$. Using this growing collection of reductions can be very effective in stemming the exponential blow-up in both the number of string graphs that need to be enumerated and the number of rewrite rules found.

\begin{theorem}
  Applying the synthesis procedure for a series of runs given by $\{ (M_i,N_i,P_i,Q_i) \}$ yields a rewrite system $\mathbb S \cup \mathbb K$ that is complete for all graphs of size $(M_i,N_i,P_i,Q_i)$ for some $i$. Furthermore, if $\mathbb K = \{\}$ the rewrite system yields unique normal forms for graphs of the given sizes.
\end{theorem}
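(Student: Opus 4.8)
The plan is to read \emph{complete} as: for string graphs $G,H$ of one of the enumerated sizes $(M_i,N_i,P_i,Q_i)$, $\llbracket G\rrbracket=\llbracket H\rrbracket$ (up to scalar and permutation of the boundary) iff $G\rewriteequiv_{\mathbb S\cup\mathbb K}H$. Soundness ($\Leftarrow$) I would dispose of first: each rule and each congruence is introduced only between graphs of a single semantic class, and since $\llbracket-\rrbracket$ is the evaluation functor out of the free symmetric traced category it is compositional, so rewriting a matched subgraph into a semantically equal one through the DPO square~(\ref{eq:sg-dpo}) preserves $\llbracket-\rrbracket$ up to the same equivalence --- provided the boundary span of each rule (Definition~\ref{def:rw-rule}) is chosen to realise the permutation witnessing the class equality. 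The real content is completeness ($\Rightarrow$), which I would assemble from termination, completeness of enumeration, and stability of the semantic classes.

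First I would invoke the reduction ordering: $\kappa$ lands in $\mathbb N$ and strictly decreases along $\rewritesto_{\mathbb S}$, so $\rewritesto_{\mathbb S}$ is terminating and every $G$ has a normal form $G\normalised$. Because rewrite rules preserve the boundary and $\kappa$ is non-increasing in the number of node- and wire-vertices, $G\normalised$ keeps the same $\In,\Out$ as $G$ and gains no node- or wire-vertices (and hence no additional pluggings); thus a $G$ of size $(M_i,N_i,P_i,Q_i)$ has a normal form lying within the \emph{same} size bound, and is therefore among the graphs swept by run $i$ of {\sc ENUM} (which ranges over all of $\mathcal D(M_i+p,N_i+p,Q_i)$ for $0\le p\le P_i$, i.e.\ \emph{up to} $P_i$ pluggings and $Q_i$ node-vertices).

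Next I would show {\sc ENUM} emits a representative of every isomorphism class of $\mathbb S$-irreducible graph of that size. The guard on line~\ref{ln:redex-guard} discards a plugging only when $G/_{\!(x,y)}$ is a redex, and by Lemma~\ref{lem:match-local} any redex present after a plugging but not local to the new vertex was already present beforehand and so was rejected at an earlier stage; hence no irreducible graph is lost, while line~\ref{ln:similar} removes only redundant isomorphic copies. Sorting the emitted irreducible graphs into semantic classes $C$, selecting the minimal set $C'$ and a representative $G_0$, and adding $G\rewritesto G_0$ for $G\in C-C'$ together with the congruences for $C'-\{G_0\}$ makes every emitted irreducible graph of class $C$ equal to $G_0$ under $\rewriteequiv_{\mathbb S\cup\mathbb K}$.

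Combining the pieces: if $G,H$ have an enumerated size and $\llbracket G\rrbracket=\llbracket H\rrbracket$, then $G\normalised$ and $H\normalised$ are enumerated, and because reduction preserves $\llbracket-\rrbracket$ they fall in the same class $C$, so both are $\rewriteequiv_{\mathbb S\cup\mathbb K}$-equal to its representative $G_0$; chaining $G\rewritetrans_{\mathbb S}G\normalised\rewriteequiv_{\mathbb S\cup\mathbb K}G_0\rewriteequiv_{\mathbb S\cup\mathbb K}H\normalised\leftrewritetrans_{\mathbb S}H$ gives $G\rewriteequiv_{\mathbb S\cup\mathbb K}H$. When $\mathbb K=\{\}$, no congruences are produced, which forces each minimal set to be a singleton $C'=\{G_0\}$; then $G_0$ is the \emph{only} $\mathbb S$-irreducible graph in its class, so any normal form of $G$ must equal the representative of $G$'s class, yielding unique normal forms. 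The step I expect to be the main obstacle is reconciling these arguments with the fact that $\mathbb S$ grows from run to run: a graph emitted as irreducible in run $i$ can be made reducible by a rule added later, so I would need the run-time $\mathbb S$ to sit inside the final $\mathbb S$ and check that later rules only shorten reductions and refine --- never disrupt --- the class structure on which the representatives $G_0$ are based.
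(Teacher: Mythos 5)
Your proof is correct and follows essentially the same route as the paper's: normalise both graphs with respect to the terminating system $\mathbb S$, use the fact that $\kappa$ is non-increasing in node- and wire-vertices to conclude that the normal forms stay within size $(M_i,N_i,P_i,Q_i)$ and hence were enumerated, so they are either isomorphic or linked through congruences in $\mathbb K$, with $\mathbb K = \{\}$ forcing unique normal forms. The extra material you supply (the soundness direction, the coverage of {\sc ENUM}, and the multi-run worry) is left implicit in the paper's half-paragraph proof; in particular your ``main obstacle'' dissolves once you observe that a normal form with respect to the \emph{final} $\mathbb S$ is a fortiori irreducible with respect to the run-$i$ system (rules are only ever added), hence was enumerated in run $i$ and, never having acquired an outgoing reduction, must lie in the minimal set $C'$ of its semantic class.
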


\begin{proof}
  For completeness, suppose there are string graphs $G$ and $H$ of size $(M_i,N_i,P_i,Q_i)$ such that $\llbracket G\rrbracket = \llbracket H\rrbracket$. Since $\mathbb S$ is a terminating rewrite system, we can normalise $G$ and $H$ with respect to $\mathbb S$. Since $\kappa$ is non-increasing on node-vertices and wire-vertices, the associated normal forms $G'$ and $H'$ are also of size $(M_i,N_i,P_i,Q_i)$, so both graphs will have been enumerated. Therefore, either $G' \cong H'$ or $(G' \rewritesto H') \in \mathbb K$. For unique normal forms, note that when $\mathbb K$ is empty, only $G' \cong H'$ is possible.
\end{proof}

Of course, we are interested in graphs of all sizes. However, completeness is not expected (and sometimes not desirable) in these cases, as the model or models used to synthesise the theory may be degenerate in some sense as examples of some algebraic structure. To ensure unique normal forms, we would need to ensure confluence of $\mathbb S$. Unlike termination, this does not come for free, as there may exist critical pairs that are larger than any of the graphs synthesised. However, the synthesis procedure could be used in conjection with a graphical variant of Knuth-Bendix completion to increase the chances of obtaining a confluent rewrite system for graphs of all sizes.




\vspace{-3mm}\section{Application and Results}\vspace{-2mm}\label{sec:application}


The procedure described in section~\ref{sec:quanto-cosy} has been implemented in a tool called QuantoCosy. This sits on top a general framework for string graph rewriting called Quantomatic~\cite{Quantomatic}. We show in this section how QuantoCosy performs in synthesising an example theory, called the GHZ/W-calculus. For a detailed description of this theory and how it can be applied, see~\cite{CoeckeKissinger2010}.

In the GHZ/W-calculus, we can express large, many-body entangled quantum states as compositions of simpler components. The key here is to use three-body states as algebraic ``building blocks'' for more complex states. A three-system, or tripartite entangled state, can be represented as a vector in $H \otimes H \otimes H$ for some complex vector space $H$. For finite dimensions, we can equivalently express this vector as a linear map $H \otimes H \rightarrow H$. When $H \cong \mathbb C^2$, there are only two ``interesting'' tripartite entangled states, the GHZ state and the W state. Regarding these as linear maps from $\mathbb C^2 \otimes \mathbb C^2 \cong \mathbb C^4$ to $\mathbb C^2$, they can be written as follows:
\[
\beginpgfgraphicnamed{ghz_gen}
\begin{tikzpicture}[string graph,scale=0.8]
	\begin{pgfonlayer}{nodelayer}
		\node [style=sg vertex] (0) at (0, 0) {};
		\node [style=sg wire vertex] (1) at (-0.5, 0.75) {};
		\node [style=sg wire vertex] (2) at (0.5, 0.75) {};
		\node [style=sg wire vertex] (3) at (0, -0.75) {};
	\end{pgfonlayer}
	\begin{pgfonlayer}{edgelayer}
		\draw [style=sg diredge] (1) to (0);
		\draw [style=sg diredge] (2) to (0);
		\draw [style=sg diredge] (0) to (3);
	\end{pgfonlayer}
\end{tikzpicture}}
\endpgfgraphicnamed :=
    \left(\begin{matrix}
      1 & 0 & 0 & 0 \\
      0 & 0 & 0 & 1
    \end{matrix}\right)
    \qquad\qquad
\beginpgfgraphicnamed{w_gen}
\begin{tikzpicture}[string graph,scale=0.8]
	\begin{pgfonlayer}{nodelayer}
		\node [style=sg grey vertex] (0) at (0, 0) {};
		\node [style=sg wire vertex] (1) at (-0.5, 0.75) {};
		\node [style=sg wire vertex] (2) at (0.5, 0.75) {};
		\node [style=sg wire vertex] (3) at (0, -0.75) {};
	\end{pgfonlayer}
	\begin{pgfonlayer}{edgelayer}
		\draw [style=sg diredge] (1) to (0);
		\draw [style=sg diredge] (2) to (0);
		\draw [style=sg diredge] (0) to (3);
	\end{pgfonlayer}
\end{tikzpicture}}
\endpgfgraphicnamed :=
    \left(\begin{matrix}
      0 & 1 & 1 & 0 \\
      0 & 0 & 0 & 1
    \end{matrix}\right)
\]

These maps exhibit particularly nice algebraic identities. They can both be extended to \textit{commutative Frobenius algebras}, which are unital, associative algebras over a vector space (or an object in some monoidal category) that have a strong \textit{self-duality} property. In particular, they can be naturally associated with a co-unital, co-algebra that interacts well with the algebra. For pairs of Frobenius algebras, we can also always construct a special map called the \textit{dualiser}. See~\cite{CoeckeKissinger2010} for details on how these things are defined algebraically.

The generators of the GHZ/W-calculus are: (i) the generators of both Frobenius algebras, (ii) the dualiser of the two algebras, and (iii) two zero vectors.
\begin{equation}\label{eq:ghz-w-gens}
\beginpgfgraphicnamed{ghz_w_gens}
\begin{tikzpicture}[string graph]
	\begin{pgfonlayer}{nodelayer}
		\node [style=sg vertex] (0) at (-7.5, 0) {};
		\node [style=sg wire vertex] (1) at (-7.75, 0.5) {};
		\node [style=sg wire vertex] (2) at (-7.25, 0.5) {};
		\node [style=sg wire vertex] (3) at (-7.5, -0.5) {};
		\node [style=none] (4) at (-8.5, -0.75) {};
		\node [style=none] (5) at (-8.5, 0.75) {};
		\node [style=none] (6) at (7, -0.75) {};
		\node [style=none] (7) at (7, 0.75) {};
		\node [style=none] (8) at (-6.75, -0.5) {$,$};
		\node [style=sg vertex] (9) at (-6.25, 0.25) {};
		\node [style=sg wire vertex] (10) at (-6.25, -0.25) {};
		\node [style=none] (11) at (-5.75, -0.5) {$,$};
		\node [style=sg wire vertex] (12) at (-5.25, -0.5) {};
		\node [style=sg wire vertex] (13) at (-4.75, -0.5) {};
		\node [style=sg wire vertex] (14) at (-5, 0.5) {};
		\node [style=sg vertex] (15) at (-3.75, -0.25) {};
		\node [style=sg vertex] (16) at (-5, 0) {};
		\node [style=sg wire vertex] (17) at (-3.75, 0.25) {};
		\node [style=none] (18) at (-4.25, -0.5) {$,$};
		\node [style=none] (19) at (-3.25, -0.5) {$,$};
		\node [style=sg wire vertex] (20) at (-2.5, -0.5) {};
		\node [style=sg wire vertex] (21) at (-1.25, -0.25) {};
		\node [style=sg grey vertex] (22) at (-1.25, 0.25) {};
		\node [style=none] (23) at (-1.75, -0.5) {$,$};
		\node [style=sg wire vertex] (24) at (0, 0.5) {};
		\node [style=none] (25) at (-0.75, -0.5) {$,$};
		\node [style=sg grey vertex] (26) at (1.25, -0.25) {};
		\node [style=sg wire vertex] (27) at (-0.25, -0.5) {};
		\node [style=sg wire vertex] (28) at (-2.75, 0.5) {};
		\node [style=sg grey vertex] (29) at (-2.5, 0) {};
		\node [style=sg wire vertex] (30) at (-2.25, 0.5) {};
		\node [style=none] (31) at (1.75, -0.5) {$,$};
		\node [style=sg wire vertex] (32) at (1.25, 0.25) {};
		\node [style=sg grey vertex] (33) at (0, 0) {};
		\node [style=none] (34) at (0.75, -0.5) {$,$};
		\node [style=sg wire vertex] (35) at (0.25, -0.5) {};
		\node [style=tick vertex] (36) at (2.25, 0) {};
		\node [style=sg wire vertex] (37) at (2.25, -0.5) {};
		\node [style=sg wire vertex] (38) at (2.25, 0.5) {};
		\node [style=none] (39) at (2.75, -0.5) {$,$};
		\node [style=point] (40) at (3.5, 0.25) {$0$};
		\node [style=copoint] (41) at (5, -0.25) {$0$};
		\node [style=none] (42) at (4.25, -0.5) {$,$};
		\node [style=sg wire vertex] (43) at (3.5, -0.5) {};
		\node [style=sg wire vertex] (44) at (5, 0.5) {};
		\node [style=none] (45) at (5.75, -0.5) {$,$};
		\node [style=sg wire vertex] (46) at (6.25, 0.5) {};
		\node [style=sg wire vertex] (47) at (6.25, -0.5) {};
	\end{pgfonlayer}
	\begin{pgfonlayer}{edgelayer}
		\draw [style=sg diredge] (1) to (0);
		\draw [style=sg diredge] (2) to (0);
		\draw [style=sg diredge] (0) to (3);
		\draw [style=small braceedge] (4.center) to (5.center);
		\draw [style=small braceedge] (7.center) to (6.center);
		\draw [style=sg diredge] (9) to (10);
		\draw [style=sg diredge] (16) to (12);
		\draw [style=sg diredge] (16) to (13);
		\draw [style=sg diredge] (14) to (16);
		\draw [style=sg diredge] (17) to (15);
		\draw [style=sg diredge] (28) to (29);
		\draw [style=sg diredge] (30) to (29);
		\draw [style=sg diredge] (29) to (20);
		\draw [style=sg diredge] (22) to (21);
		\draw [style=sg diredge] (33) to (27);
		\draw [style=sg diredge] (33) to (35);
		\draw [style=sg diredge] (24) to (33);
		\draw [style=sg diredge] (32) to (26);
		\draw [style=sg diredge] (38) to (36);
		\draw [style=sg diredge] (36) to (37);
		\draw [style=sg diredge] (40) to (43);
		\draw [style=sg diredge] (44) to (41);
		\draw [style=sg diredge] (46) to (47);
	\end{pgfonlayer}
\end{tikzpicture}}
\endpgfgraphicnamed
\end{equation}

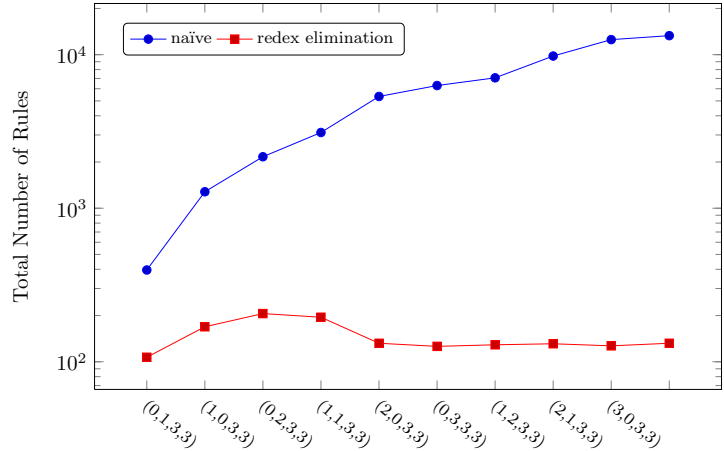
\begin{figure}
  \centering
  \scalebox{0.8}{%
\beginpgfgraphicnamed{naive_vs_redex_plot}
\begin{tikzpicture}
    \begin{semilogyaxis}[
        ylabel=Total Number of Rules,
        x label style={yshift=-8mm},
        x tick label style={anchor=west,rotate=-40,yshift=-2mm,font=\footnotesize},
        xticklabels={(0,0,3,3),(0,1,3,3),(1,0,3,3),(0,2,3,3),(1,1,3,3),
                     (2,0,3,3),(0,3,3,3),(1,2,3,3),(2,1,3,3),(3,0,3,3)},
        width=12cm,
        height=8cm,
        legend style={
            at={(0.27,0.95)},
            anchor=north,
            legend columns=2,
            cells={anchor=west},
            font=\footnotesize,
            rounded corners=2pt
        }]
        
        \addplot plot coordinates {
            (0,396)
            (1,1280)
            (2,2164)
            (3,3113)
            (4,5345)
            (5,6294)
            (6,7066)
            (7,9798)
            (8,12530)
            (9,13302)
        };
        \addplot plot coordinates {
            (0,107)
            (1,169)
            (2,206)
            (3,195)
            (4,132)
            (5,126)
            (6,129)
            (7,131)
            (8,127)
            (9,132)
        };
        
        \legend{na\"ive,redex elimination}
    \end{semilogyaxis}
\end{tikzpicture}}
\endpgfgraphicnamed}
  \caption{\label{fig:plot} Number of rules syntesised across 10 runs of the synthesis procedure.}
\end{figure}

As was the case for terms, filtering out redexes has a huge impact on the number of string graphs that need to be checked. In ten synthesis runs, we generate all of the GHZ/W rewrite rules with total node-vertices, pluggings, and inputs+outputs $\leq 3$. Using a na\"ive synthesis, this yielded $13,302$ rewrite rules. Using the procedure described in the previous section, this yielded $132$ rules, including all of the rules used to define the GHZ/W-calculus in~\cite{CoeckeKissinger2010}. A plot of the number of rewrite rules generated using a na\"ive graph enumeration algorithm against the number generated using the redex-elimination procedure across all $10$ runs is provided in Figure~\ref{fig:plot}.

\vspace{-3mm}\section{Conclusion and Outlook}\vspace{-2mm}\label{sec:conclusion}

In this paper, we showed how string diagrams could be discretised into string graphs, which can be manipulated automatically using DPO graph rewriting. We then demonstrated how a technique developed for term theories called conjecture synthesis can be adapted to work for string graphs. Finally, we showed an example of the application of this technique to a real graphical theory, called the GHZ/W-calculus. The results there were promising, as we demonstrated an exponential drop-off in the number of extraneous rules generated when using redex-eliminating routine as compared to a na\"ive synthesis routine.

In the end, some of the $132$ rules produced were seemingly arbitrary, whereas others reflect a real algebraic relationship between the GHZ-structure and the W-structure. There are various reasons, including simple aesthetics, why a human mathematician would take some of these rewrites to be valuable or interesting. QuantoCosy is completely ignorant to such considerations, as it only knows what rules to throw away, rather than which ones to highlight. However, this is already useful, as it provides a researcher with a pool of hundreds of rules to sort through, rather than tens of thousands.

There are various ways in which this can be improved further. One is to make the synthesis smarter by employing heuristics that search for particular classes of rewrites that are common to many algebraic structures. This has been done for terms using \textit{scheme-based} conjecture synthesis~\cite{MontanoRivas2010}, where the search procedure starts by attempting to prove familiar sets of identities (associativity and commutativity of a binary operation, for instance) before moving to less familiar conjectures.

Another way to improve the quality and conciseness of the synthesis output would be to refine the synthesised rules using Knuth-Bendix completion. This could prove a powerful method for automatically producing new rewrite rules for \textit{pattern graphs}~\cite{KissingerThesis}. Pattern graphs are used to describe infinite families of string graphs that have some repeated substructure. Starting with a rewrite system that contains some pattern graph rewrite rules that are previously known, and some concrete rules that are produced using conjecture synthesis, one can obtain new pattern graph rewrites by performing critical pair analysis and completion.

Finally, and most importantly, this procedure can be improved and its scope broadened by looking at different kinds of concrete models. As mentioned briefly in section~\ref{sec:interpreting}, string graphs can be used to construct the free symmetric traced category over a signature. So, any interpretation of a signature in a concrete symmetric traced category can be lifted uniquely to an evaluation of string graphs. Examples of such categories where one might wish to form models are:
  $\catMat(F)$ of matrices over a finite field,
  $(\catFRel, \times)$ of sets and finite relations with the ``wave'' style tensor and trace,
  $(\catFRel, +)$ of sets and finite relations with the ``token'' style tensor and trace,
  and products of concrete symmetric traced categories (i.e. giving valuations in many models simultaneously).


The support of a theory synthesis tool allows a mathematician to very quickly get a picture of a theory by specifying a few generators, and opens to the way for experimentation and rapid development of a wide variety of new theories.

\bibliographystyle{akbib}
\bibliography{bibfile}

\end{document}